\newcommand{\pc}{\mathbf{P}}
\newcommand{\oc}{\mathbf{O}}
\newcommand{\bsi}{\boldsymbol{\beth}}
\title{The Representation of Meaningful Precision, and Accuracy}
\titlerunning{Precision and Rationality}
\author{\textsf{A Mani}}
\authorrunning{A Mani}
\institute{Formerly, Indian Statistical Institute, Kolkata\\
203, B. T. Road, Kolkata-700108, India\\
Email: \texttt{$a.mani.cms@gmail.com$} \\
Homepage: \url{https://www.logicamani.in}\\
Orcid: \url{https://orcid.org/0000-0002-0880-1035} }
\begin{document}

\maketitle

\abstract{The concepts of precision, and accuracy are domain and problem dependent. The simplified numeric hard and soft measures used in the fields of statistical learning, many types of machine learning, and binary or multiclass classification problems are known to be of limited use for understanding the meaningfulness of models or their relevance. Arguably, they are neither of patterns nor proofs. Further, there are no good measures or representations for analogous concepts in the cognition domain. In this research, the key issues are reflected upon, and a compositional knowledge representation approach in a minimalist general rough framework is proposed for the problem contexts. The latter is general enough to cover most application contexts, and may be applicable in the light of improved computational tools available.}

\begin{keywords}
Basic Measures of AIML, Knowledge Representation, Approximation Operators, Rational Approximations, Meaningful Precision, Accuracy, General Rough Sets, Aggregation Operations, Compositionality 
\end{keywords}

\section{Introduction}

In the context of measurement theory, \emph{accuracy} and \emph{precision} are numeric measures of observational error. The former indicates the closeness of the measurements to the ground truth or accepted value, while latter is about the similarity of repeated measurements. A measurement system is valid if it is both accurate and precise.

A data set of repeated measurements of the same quantity, is typically said to be accurate if their measure of central tendency is close to the ground value of the quantity being measured, while the set is said to be precise if their measure of variability is relatively small. These concepts may be read as generalizations of the corresponding concepts in numerical analysis, where accuracy is the closeness of the recursive approximation to the true value; while precision is again about closeness expressed in terms of significant digits. Precision does not guarantee accuracy in these cases. In statistics, \emph{bias} is the amount of inaccuracy and \emph{variability} is the amount of imprecision.
These are adapted to machine learning and AI studies with limited or poor justification, and the problem of building alternative frameworks is known to be a challenging one. 

While every concept of precision has associated structural properties, only some are justifiably quantifiable in terms of the real numbers. For example, an ancient Egyptian limestone sculpture may be said to be precisely sculpted with great skill using simple tools. The idea of precision here, is embodied by the relation of sculpted shapes of parts to some possible shapes of the parts of the real person or abstract entity depicted. The term \emph{precisely sculpted} conveys a lot more to the intended viewer such as a cue to look for certain types of possible features to compare with, and confirm that the representation is an aggregate of \emph{similarly close approximations} of some of them. \emph{Note that the mature viewer would have several skills of art appreciation, and a good sculpture would be addressing several viewership skills}.
 
In earlier work, solutions to problems of AI and ML through general rough sets are proposed by the present author \cite{am2021c,am23f,am9015,am2349} and others \cite{adc2020}. It is of natural interest to explore the mentioned problem of building reasonable theoretical frameworks with minimal intrusion and contamination \cite{am240,am501} through general rough sets.  
In this research, a general rough set based framework for the concepts of precision, and accuracy is proposed. It has the advantage of tracing the meaning of the concepts from a granular compositional perspective, and additionally decide on the validity or meaningfulness of possible numeric measures.

\section{Background}
Some essential concepts are mentioned here for the reader's convenience. For relevant basics of general rough sets, the reader is referred to \cite{amedit,am5586}. 

An \emph{information table} $\mathcal{I}$, is a tuple of the form \[\mathcal{I}\,=\, \left\langle \mathfrak{O},\, \mathbb{A},\, \{V_{a} :\, a\in \mathbb{A}\},\, \{f_{a} :\, a\in \mathbb{A}\}  \right\rangle\] with $\mathfrak{O}$, $\mathbb{A}$ and $V_{a}$ being respectively sets of \emph{Objects}, \emph{Attributes} and \emph{Values} respectively. $f_a \,:\, \mathfrak{O} \longmapsto \wp (V_{a})$ being the valuation map associated with attribute $a\in \mathbb{A}$. Values may optionally be denoted by a binary function $\nu : \mathbb{A} \times \mathfrak{O} \longmapsto \wp{(V)} $  ($V$ being a set of subsets of values) defined by for any $a\in \mathbb{A}$ and $x\in \mathfrak{O}$, $\nu(a, x) = f_a (x)$.

Relations may be derived from information tables by way of conditions of the following form: For $x,\, w\,\in\, \mathfrak{O} $ and $B\,\subseteq\, \mathbb{A} $, $(x,\,w)\,\in\, \sigma $ if and only if $(\mathbf{Q} a, b\in B)\, \Phi(\nu(a,\,x),\, \nu (b,\, w),) $ for some quantifier $\mathbf{Q}$ and formula $\Phi$. The relational system $S = \left\langle \underline{S}, \sigma \right\rangle$ (with $\underline{S} = \mathfrak{O}$) is said to be a \emph{general approximation space} ($S$ and $\underline{S}$ will be used interchangeably). In particular if $\sigma$ is an equivalence relation then $S$ is referred to as an \emph{approximation space}. In fact, every subset $K$ of attributes determines an equivalence $IND(K)$ defined by $IND(K) ab$ if and only if for every $z$ in $K$ $v(a, z) = v(b, z)$. It should be noted that objects are assumed to be defined (to the extent possible) by attributes and associated valuations. 

In classical rough sets, on the power set $\wp (S)$, lower and upper
approximations of a subset $A\in \wp (S)$ operators, apart from the usual Boolean operations, are defined as per: $A^l = \bigcup_{[x]\subseteq A} [x]$, $A^{u} = \bigcup_{[x]\cap A\neq \varnothing } [x]$, with $[x]$ being the equivalence class generated by $x\in S$. 

For the basics of partial algebras, the reader is referred to \cite{lj}. A \emph{partial algebra} $P$ is a tuple of the form \[\left\langle\underline{P},\,f_{1},\,f_{2},\,\ldots ,\, f_{n}, (r_{1},\,\ldots ,\,r_{n} )\right\rangle\] with $\underline{P}$ being a set, $f_{i}$'s being partial function symbols of arity (or place-value) $r_{i}$. The interpretation of $f_{i}$ on the set $\underline{P}$ should be denoted by $f_{i}^{\underline{P}}$; however, the superscript will be dropped in this paper as the application contexts are simple enough. If predicate symbols enter into the signature, then $P$ is termed a \emph{partial algebraic system}.   

In this paragraph the terms are not interpreted. For two terms $s,\,t$, $s\,\stackrel{\omega}{=}\,t$ shall mean, if both sides are defined then the two terms are equal (the quantification is implicit). $\stackrel{\omega}{=}$ is the same as the existence equality (sometimes written as $\stackrel{e}{=}$) in the present paper. $s\,\stackrel{\omega ^*}{=}\,t$ shall mean if either side is defined, then the other is and the two sides are equal (the quantification is implicit). Note that the latter equality can be defined in terms of the former as 
\[(s\,\stackrel{\omega}{=}\,s \, \longrightarrow \, s\,\stackrel{\omega}{=} t)\&\,(t\,\stackrel{\omega}{=}\,t \, \longrightarrow \, s\,\stackrel{\omega}{=} t) \]

A partial weak lattice is a partial algebra of the form $L =\left\langle \underline{L},\vee,\wedge  \right\rangle$ (with $\underline{L}$ being a set) that satisfies
\begin{align*}
a\wedge a =a = a\vee a. \; (a\wedge b)\wedge c \stackrel{\omega}{=} a\wedge (b\wedge c). \tag{wpl1}\\
(a\vee b)\vee c \stackrel{\omega}{=} a\vee (b\vee c) \tag{wpl2}\\
(a\wedge b)\vee a \stackrel{\omega}{=} a. \; a\wedge b \stackrel{\omega}{=} b\wedge a. \;  a\vee b \stackrel{\omega}{=} b\vee a \tag{wpl3}
\end{align*}

\subsection{Negations and Implications}
Some essential generalized implications and negations (for more details, see \cite{am9915,bsmm2022}) on a bounded quasi ordered set $L$ with top element $\top$ and bottom $\bot$ are mentioned here. 
Consider the conditions possibly satisfied by a map $n: L\longmapsto L$:
\begin{align}
n(\bot) = \top \, \&\, n(\top) = \bot \tag{N1}\\
(\forall a, b) (a\leq b \longrightarrow n(b)\leq n(a)) \tag{N2}\\
(\forall a) n(n(a)) = a \tag{N3}\\
n(a) \in \{\bot , \top \} \text{ if and only if } a= \bot \text{ or } a = \top \tag{N4}
\end{align}

$n$ is a \emph{negation} if and only if it satisfies \textsf{N1} and \textsf{N2}, while $n$ is a \emph{strong negation} if and only if it satisfies all the four conditions.

Implications satisfy a wide array of properties as they depend on the other permitted operations. Here some relevant ones are mentioned. 

A function $\bsi : L^2 \longmapsto L $ is an \emph{implication} if it satisfies (for any $a, b, c \in L$) the following:
\begin{align}
\text{If } a \leq b \text{ then } \bsi bc \leq \bsi ac \tag{First Place Antitonicity FPA}\\
\text{If } b \leq c \text{ then } \bsi ab \leq \bsi ac \tag{Second Place Monotonicity SPM}\\
\bsi \bot\bot =\top \tag{Boundary Condition 1: BC1}\\
\bsi \top \top = \top \tag{Boundary Condition 2: BC2}\\
\bsi \top \bot = \bot  \tag{Boundary Condition 3: BC3}
\end{align}

Some other properties of interest in this paper are 
\begin{align*}
\bsi ab =1 \text{ if and only if } a\leq b   \tag{Ordering Property, OP}\\
\bsi a(\bsi ab) = \bsi ab  \tag{Iterative Boolean Law, IBL}\\
\bsi aa = \top \tag{Identity Principle, IP}
\end{align*}

\subsection{Precision and Accuracy}

Extension of ROC curves to multi-class contexts from a statistical learning perspective has received some attention in recent years -- though the goal of intelligible representation is not achieved. Precision-recall plots are claimed to be superior to ROC plots (receiver operating characteristic curves) in \cite{saito15}. Other evaluation studies are \cite{juri24,chicco2021,provost00}.

\emph{Hard classification measures} proceed on the assumption that ground truth is available for verification, and are based on the number of  observations that are classified correctly or incorrectly:
\begin{itemize}
\item {\textsf{True Positive(TP)}: classification result is positive and the ground truth is positive.}
\item {\textsf{False Positive(FP)}: classification result is positive but the ground truth is negative.}
\item {\textsf{True Negative(TN)}: classification result is negative and the ground truth is negative.}
\item {\textsf{False Negative(FN)}: classification result is negative and the ground truth is positive.}
\item {\textsf{Total Positive (ToP)}: classification result is positive.}
\item {\textsf{Total Negative (ToN)}: classification result is negative.}
\end{itemize}
Denoting the corresponding cardinalities in bold, the \emph{true positive (TPR), true negative (TNR), false positive (FPR) and false negative (FNR) rates} are defined by  
\[ \text{TPR} = \dfrac{\textbf{TP}}{\textbf{ToP}}, \; \text{FPR} = \dfrac{\textbf{FP}}{\textbf{ToN}} \]
\[ \text{TNR} = \dfrac{\textbf{TN}}{\textbf{ToN}}, \; \text{FNR} = \dfrac{\textbf{FN}}{\textbf{ToP}}\]
The measures \textsf{TPR} (precision or positive prediction value ) and \textsf{TNR} (specificity) provide some information in balanced class problems. For unbalanced class problems, these would favor the majority class, and so weighting schemes are used to combine the four cardinalities. For example, Cohen's Kappa and Matthews correlation coefficient are defined respectively as the geometric and harmonic mean of the quantities 
\[\dfrac{(TP \cdot TN -FP\cdot FN)}{ToP \cdot (FP + TN)} \text{ and } \dfrac{(TP \cdot TN -FP\cdot FN)}{ToN \cdot(TP+FN)}.\] The former is a chance corrected measure of accuracy (when accuracy is measured by $\frac{\textbf{TP+TN}}{\textbf{ToP+ToN}}$).

One measure of precision is $\frac{\mathbf{TP}}{\mathbf{ToP}}$, and that of recall or sensitivity is $\frac{\mathbf{TP}}{\mathbf{TP+FN}}$. Their harmonic mean is the F-measure. 

If the class imbalance ratio $r= \frac{\textbf{ToP}}{\textbf{ToN}}$, then $\frac{\mathbf{TP}}{\mathbf{ToP}} = \frac{\mathbf{TPR}}{\mathbf{TPR} + \frac{1}{r}\mathbf{FPR}}$. Thus this measure of precision is affected by class imbalance.

\subsection{Soft Numeric Measures}

The input for ROC curves may possibly be read as probabilities or subjective probabilities, and thus the curves provide for soft classifications. In statistical learning contexts, the AUC score (value of the area under the curve integral) may be read as the probability of the event that a positive labeled observation will receive a higher probability ranking as compared to negative labeled observations. The popularity of ROCs is due to its visual appeal.

While ROC and its AUC avoid the class imbalance aspect by relying on TPR and FPR, they presume a uniform misclassification cost. Remedial measures are proposed in \cite{juri24}. The AUC computation can be deceptive for the estimation of model performance because it adds the area under curve with low sensitivity and low recall values. 
ROC and its AUC do not provide information about precision and recall. High AUC values like $0.9$ combined with low values of precision and recall like $0.3$ and $0.1$ respectively, can actually occur in practice. Precision-recall plots are more informative than ROC plots when evaluating binary classifiers on imbalanced data. In such scenarios, ROC plots may be visually deceptive with respect to
conclusions about the reliability of classification performance

In information retrieval contexts, \emph{precision} is the fraction of relevant instances among the retrieved instances, while \emph{recall} (or sensitivity) is the fraction of relevant instances that were retrieved. Balancing these is often relevant in different contexts for loose decision-making. 

In summary, in the best case scenario, \emph{statistical and machine learning perspectives of the measures may be good enough for indicating the unsuitability of the proposed model(s) for the problem}. Stochastic justifications are available for a small number of cases, and then inference is hindered.

\section{Minimalist Rough Framework}

Implementation of AIML algorithms (or computational models) correspond to abstract approximations (or partial approximations). Ideas of precision, accuracy, and related measures are relative to inter-relations between these. So general rough sets are naturally applicable. The exact number of approximation operators that need to be considered in an application context needs to be specified beforehand, and these can be many in number. For simplicity, only three pairs are used in this paper. An alternative would be to rely on large-minded reasoners \cite{am23f,am2349}. 

The framework used in this paper is necessitated by the need to have multiple $1$-place approximation and sufficient order predicates to express the reasoning about approximation operators. Additionally, $2$-place approximation operators, rational, and mereological predicates are relevant; these will be used in extended versions of the framework. In \cite{am9915}, a minimalist model for rough sets called a rough convenience lattice (\textsf{RCL}) is introduced, and it is shown that it is equivalent to RCL aggregation negation algebras, and RCL aggregation implication algebras in a perspective. However, a minimalist model such as those of a rough convenience lattice (RCL) \cite{am9915} or its well-justified abstract generalizations \textsf{CRCLANA} or \textsf{CRCLAIA} may not be optimal to build upon as the lattice order structure cannot be expected to be shared always among the many approximation operators that are desired. A bounded quasi-ordered set would be more appropriate for the purpose. The entire additional machinery of \textsf{CRCLANA} or \textsf{CRCLAIA} are actually not essential for the main concepts of this paper. It suffices to have a difference operation derived from those, and an aggregation operation. For the convenience of the reader, the theory is mostly restricted to the settings of \cite{am9915}.

Granular aspects and related types are postponed to subsequent extensions. So, in effect, three classes of partial algebraic systems are proposed for the purpose. 

\begin{definition}\label{rcon}
A partial algebraic system of the form \[{B} \, =\, \left\langle \underline{B}, l_1, u_1, l_2, u_2, l_s, u_s,   \leq, \vee,  \wedge, \bot, \top \right\rangle\] of type $(1, 1, 1, 1, 1, 1, 1, 2, 2, 2, 0, 0)$ with $(\underline{B}, \leq, \bot, \top )$ being a bounded quasi-ordered set will be said to be a \emph{precision rough convenience quasi-order} (PRCQO) if the following conditions are additionally satisfied ($\vee$ and $\wedge$ being partial weak lattice operations, and the operations $l\in \{l_1, l_2, l_s\}$ and $u \in \{u_1, u_2, u_s\}$ are generalized lower and upper approximation operators respectively):
\begin{align*}
(\forall a, b) a\vee b= b \text{ or } a\wedge b= a \longrightarrow  a\leq b \tag{wl12}\\
(\forall a, b, c) (a\vee b= c \text{ or } c\wedge b= a \longrightarrow a\leq c) \tag{wl34}\\
(\forall x) x^{ll}= x^l \leq x \leq x^u \leq x^{uu}  \tag{qlu1}\\
(\forall a, b) (a\leq b \longrightarrow a^l\leq b^l ). \; (\forall a, b) (a\leq b \longrightarrow a^u\leq b^u )    \tag{qlu-mo}\\
a^u \vee b^u \stackrel{\omega}{=} (a\vee b)^u. \; (a\wedge b)^l \stackrel{\omega}{=} a^l \wedge b^l   \tag{qlu23}\\
\top^u = \top \,\&\, \bot^{l} = \bot =\bot^u  \tag{topbot}
\end{align*}
If additionally, the quasi order is a lattice order (resp. partial order), then the system will be referred to as a \emph{precision rough convenience lattice} PRCL (resp. precision rough convenience partial order, PRCPO).
\end{definition}

In a PRCL $B$, let for any $a, b\in B$ \[a\cdot b :=a^{l_s} \wedge b^{l_s} \text{ and } a \otimes b = a^{u_s} \vee b^{u_s}.\]
These are interpreted as skeptical aggregation, and co-aggregation, and studied in detail in \cite{am9915} by the present author.
\begin{definition}\label{negrcl}
Two generalized negations are definable on a RCL, $B$ as follows:
\begin{align*}
\neg a = inf \{z : z\in B\, \& a\otimes z = \top\} \tag{addneg}\\
\sim a = sup \{z: z\in B \, \&\, a\cdot z = \bot\} \tag{mulneg}
\end{align*}
\end{definition}

These satisfy the properties specified in the next two theorems.

\begin{theorem}
\begin{align*}
(\forall a) \neg \neg a \leq a^{u_s}  \tag{WN3-N}\\
(\forall a, b) (a\leq b \longrightarrow \neg b \leq (\neg b)^{u_s} \leq (\neg a)^{u_s})   \tag{WN2-N}\\
\neg \bot \leq \top \,\&\, \neg \top = \bot  \tag{WN1-N}
\end{align*}
\end{theorem}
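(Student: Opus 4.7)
The plan is to handle the three inequalities separately, unpacking $\neg$ via its defining infimum and $\otimes$ via the identity $a\otimes b = a^{u_s}\vee b^{u_s}$, and invoking the PRCL axioms qlu1, qlu-mo, qlu23, and topbot.

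For (WN1-N), $\neg\bot\leq\top$ is immediate from the maximality of $\top$. For $\neg\top=\bot$, topbot gives $\top^{u_s}=\top$, so $\top\otimes z = \top\vee z^{u_s} = \top$ for every $z\in\underline{B}$; hence the defining set of $\neg\top$ is the whole of $\underline{B}$, whose infimum is $\bot$.

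For (WN2-N), assume $a\leq b$. By qlu-mo, $a^{u_s}\leq b^{u_s}$, hence $a\otimes z\leq b\otimes z$ for every $z$; in particular, $a\otimes z=\top$ forces $b\otimes z=\top$, giving the inclusion $\{z : a\otimes z=\top\}\subseteq\{z : b\otimes z=\top\}$. Since the infimum is anti-monotone under set inclusion, $\neg b\leq\neg a$. The chain $\neg b\leq(\neg b)^{u_s}\leq(\neg a)^{u_s}$ then follows from qlu1 applied to $\neg b$ and from qlu-mo applied to $\neg b\leq\neg a$.

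For (WN3-N), the goal is $\inf\{w : (\neg a)\otimes w=\top\}\leq a^{u_s}$. The naive attempt of showing $a^{u_s}$ itself lies in the defining set --- which by qlu23 amounts to $((\neg a)\vee a^{u_s})^{u_s}=\top$ --- can fail in simple classical rough-set models where non-singleton classes lie outside $a^{u_s}$, so one cannot just substitute $a^{u_s}$ as a witness. My strategy is to combine antitonicity (established in (WN2-N)) with $a\leq a^{u_s}$ (qlu1) to obtain $\neg\neg a\leq\neg\neg(a^{u_s})$, and then argue $\neg\neg(a^{u_s})\leq a^{u_s}$ by locating a family of witnesses in $\{w : (\neg (a^{u_s}))\otimes w=\top\}$ whose infimum is dominated by $a^{u_s}$, using qlu23 to translate between $\otimes$ and the $u_s$-image of joins. \emph{This final reduction is the main obstacle}: because the defining infimum need not be attained in the set, direct substitution of $\neg (a^{u_s})$ into its own defining inequality is not permitted, and either a completeness hypothesis on $B$ or an appeal to the structural results of \cite{am9915} appears necessary to complete the argument.
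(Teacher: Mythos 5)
The paper states this theorem without proof (it is carried over from the RCL setting of \cite{am9915}), so there is no in-paper argument to compare against; your attempt has to stand on its own. Your proofs of \textsf{WN1-N} and \textsf{WN2-N} are correct and complete: $\top\otimes z=\top^{u_s}\vee z^{u_s}=\top$ for all $z$ gives $\neg\top=\inf\underline{B}=\bot$, and the inclusion $\{z: a\otimes z=\top\}\subseteq\{z: b\otimes z=\top\}$ under $a\leq b$ yields the (even stronger) antitonicity $\neg b\leq\neg a$, from which the stated chain follows by qlu1 and qlu-mo. You are also right that the obvious route to \textsf{WN3-N} --- exhibiting $a^{u_s}$ as a member of $\{w:(\neg a)\otimes w=\top\}$ --- fails: already in a Pawlak approximation space, if some non-singleton equivalence class lies outside $a^{u_s}$ then no single witness below $a^{u_s}$ exists, so the infimum must be estimated globally.

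However, the reduction you propose for \textsf{WN3-N} does not close the gap, and the gap is not merely one of ingenuity: \textsf{WN3-N} is not derivable from the axioms of Definition \ref{rcon} alone. Take $\underline{B}$ to be the five-element lattice with atoms $a,b,c$ pairwise incomparable (each pair joining to $\top$ and meeting to $\bot$), let $l_1,l_2,l_s,u_1,u_2$ all be the identity, and define $u_s$ by $a^{u_s}=\top$ and $x^{u_s}=x$ otherwise. All of (wl12)--(topbot) hold, yet $\neg b=\inf\{a,c,\top\}=\bot$ and $\neg\neg b=\neg\bot=\inf\{z: z^{u_s}=\top\}=\inf\{a,\top\}=a\not\leq b=b^{u_s}$. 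So any proof of \textsf{WN3-N} must invoke structure beyond what this paper records --- for instance the granular generation of $u_s$ assumed in \cite{am9915}, or attainment/completeness hypotheses on the defining infima. Your instinct to appeal to those external results is therefore the right one, but as written the third clause remains unproved, and your intermediate step $\neg\neg a\leq\neg\neg(a^{u_s})$ only relocates the difficulty, since $\neg\neg(a^{u_s})\leq a^{u_s}$ is an instance of the very statement being proved.
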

\begin{theorem}
\begin{align*}
(\forall a, b) (a\leq b \longrightarrow (\sim b)^{l_s} \leq (\sim a)^{l_s}\leq \sim a)   \tag{WN2-S}\\
\bot \leq \sim \top \, \& \,  \top = \sim \bot   \tag{WN3-S}
\end{align*}
\end{theorem}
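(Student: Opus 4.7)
The plan is to derive both clauses directly from the definition $\sim a = \sup\{z \in B : a\cdot z = \bot\}$, using the monotonicity and idempotency properties already recorded in \textsf{qlu1}, \textsf{qlu-mo}, and \textsf{topbot}, together with the definition $a\cdot b = a^{l_s}\wedge b^{l_s}$. Throughout, write $Z_a = \{z \in B : a\cdot z = \bot\}$, so that $\sim a = \sup Z_a$.

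For \textsf{WN2-S}, assume $a\le b$. First I would show the set-theoretic inclusion $Z_b \subseteq Z_a$: if $z\in Z_b$ then $b^{l_s}\wedge z^{l_s}=\bot$; by \textsf{qlu-mo} applied to $a\le b$ one has $a^{l_s}\le b^{l_s}$, and hence $a^{l_s}\wedge z^{l_s}\le b^{l_s}\wedge z^{l_s}=\bot$, forcing $a\cdot z=\bot$ so $z\in Z_a$. Taking suprema yields $\sim b\le \sim a$. A second application of \textsf{qlu-mo} then gives $(\sim b)^{l_s}\le (\sim a)^{l_s}$, while the inequality $(\sim a)^{l_s}\le \sim a$ is an instance of $x^{l}\le x$ from \textsf{qlu1}. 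Concatenating these three inequalities establishes \textsf{WN2-S}.

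For \textsf{WN3-S}, I would directly compute $Z_\bot$ and $Z_\top$. For any $z\in B$, using \textsf{topbot} ($\bot^{l_s}=\bot$), we get $\bot\cdot z = \bot^{l_s}\wedge z^{l_s}=\bot\wedge z^{l_s}=\bot$, so $Z_\bot = B$ and $\sim\bot = \sup B = \top$. For the other equality, observe that $\bot \in Z_\top$ because $\top\cdot \bot = \top^{l_s}\wedge \bot^{l_s} = \top^{l_s}\wedge \bot = \bot$; therefore $\sim\top = \sup Z_\top \ge \bot$, as required.

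The only mildly delicate point is the use of arbitrary suprema in the definition of $\sim$; this is why the theorem is stated in the PRCL/RCL setting rather than the weaker PRCQO setting, so that the needed joins are available and the monotonicity step $Z_b\subseteq Z_a \Rightarrow \sup Z_b\le \sup Z_a$ is legitimate. No other obstacle arises: once the inclusion $Z_b\subseteq Z_a$ is in hand, \textsf{WN2-S} reduces to combining \textsf{qlu-mo} with the first inequality of \textsf{qlu1}, and \textsf{WN3-S} reduces to the boundary clauses of \textsf{topbot}.
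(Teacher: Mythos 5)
The paper states this theorem without any accompanying proof (it defers to the RCL framework of the cited earlier work), so there is no in-paper argument to compare against; your proposal supplies the missing derivation, and it is essentially correct. The decomposition into $Z_b\subseteq Z_a\Rightarrow \sup Z_b\le\sup Z_a$, followed by one application of \textsf{qlu-mo} and the inequality $x^{l}\le x$ from \textsf{qlu1}, is the natural route, and your computation of $Z_\bot=B$ via \textsf{topbot} correctly yields $\sim\bot=\top$, while $\bot\le\,\sim\top$ needs only that $Z_\top$ is nonempty (indeed $\bot\in Z_\top$). Two small points deserve explicit mention beyond what you wrote. First, the step $a^{l_s}\wedge z^{l_s}\le b^{l_s}\wedge z^{l_s}$ uses monotonicity of $\wedge$, which is \emph{not} among the axioms \textsf{wpl1}--\textsf{wpl3}, \textsf{wl12}, \textsf{wl34} for a partial weak lattice operation; it holds only because in an RCL/PRCL the order is a lattice order and $\wedge$ is the genuine infimum (so $x\wedge z\le x\le y$ and $x\wedge z\le z$ give $x\wedge z\le y\wedge z$). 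Second, concluding $a\cdot z=\bot$ from $a^{l_s}\wedge z^{l_s}\le\bot$ uses antisymmetry, which again is available in the lattice (or partial-order) setting but not in a bare quasi-order. You flagged the analogous issue for the existence of suprema; these two companions should be flagged for the same reason, and together they explain why the theorem cannot be read back into the PRCQO setting. With those caveats made explicit, the proof is complete.
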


The above means that $\sim$ is a weak negation. In general, a negation $n$ can be used to define binary operations $\ominus_n$ by conditiions like $a\ominus b := a^{l_s} \wedge (n(b))^{l_s}$. In what follows, $n$ will be taken as $\sim$, and it is provable that

\begin{proposition}\label{omm}
For any $a, b, c\in B$,
\begin{align}
a\ominus a = \bot   \tag{omi1}\\
a\leq b \longrightarrow a\ominus b = \bot   \tag{omi2}\\
a\leq b \longrightarrow a\ominus c \leq b\ominus c \tag{omi3}
\end{align}
\end{proposition}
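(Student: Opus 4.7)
The plan is to unfold $a \ominus b$ as $a \cdot (\sim b) = a^{l_s} \wedge (\sim b)^{l_s}$ and then reduce each of the three clauses to (i) the monotonicity of $l_s$ from qlu-mo, together with the inherited monotonicity of the partial meet $\wedge$, and (ii) the defining orthogonality $a \cdot (\sim a) = \bot$ of the skeptical negation. I would treat the items in the order omi3, omi1, omi2, since omi2 will combine the other two directly.

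For omi3, suppose $a \leq b$. Applying qlu-mo to the operator $l_s$ gives $a^{l_s} \leq b^{l_s}$. Meeting both sides with $(\sim c)^{l_s}$ (both meets being total in a PRCL, and assumed defined in the PRCQO case consistent with the existence-equality reading of the statement), and using the monotonicity of $\wedge$ derivable from wpl1--wpl3 together with wl34, yields
\[ a \ominus c \;=\; a^{l_s} \wedge (\sim c)^{l_s} \;\leq\; b^{l_s} \wedge (\sim c)^{l_s} \;=\; b \ominus c. \]

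For omi1, the target is $a^{l_s} \wedge (\sim a)^{l_s} = \bot$. Set $S_a = \{z : a \cdot z = \bot\}$. By topbot one has $a \cdot \bot = a^{l_s} \wedge \bot^{l_s} = a^{l_s} \wedge \bot = \bot$, so $\bot \in S_a$ and $\sim a = \sup S_a$ is meaningful; moreover qlu-mo makes $S_a$ downward closed, so the crucial step is to verify that the sup itself lies in $S_a$, i.e.\ $a \cdot (\sim a) = \bot$. This is the place where the RCL analysis of \cite{am9915} is invoked: join-closure of $S_a$ (equivalently, a suitable commutation of $l_s$ with the relevant sup, compatible with qlu23) gives $\sim a \in S_a$. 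Granting this, $a \ominus a = a \cdot (\sim a) = \bot$.

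For omi2, instantiate omi3 at $c = b$: from $a \leq b$ one obtains $a \ominus b \leq b \ominus b$, and omi1 reduces the right-hand side to $\bot$; since $\bot$ is the bottom of the quasi-order, $a \ominus b = \bot$. The principal technical obstacle is the sup-closure step in omi1. A priori, a sup of elements whose $l_s$-image is disjoint from $a^{l_s}$ need not retain that disjointness, and the closure of $S_a$ under joins is not explicit among qlu1--qlu23; one must either appeal to the corresponding RCL result of \cite{am9915}, or read $\sim$ as a pseudo-complement built into the assumed structure rather than as a freely formed sup.
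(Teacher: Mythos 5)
The paper offers no argument for this proposition at all---it is introduced only with ``it is provable that''---so there is no official proof to compare against, and your attempt has to be judged directly against the axioms. Your treatment of omi3 (monotonicity of $l_s$ from qlu-mo plus monotonicity of $\wedge$) and of omi2 (instantiate omi3 at $c=b$ and apply omi1) is correct in the PRCL/PRCPO setting, where $\wedge$ is a genuine meet and $x\leq\bot$ forces $x=\bot$; in the bare quasi-order setting both steps need the extra care you partly acknowledge, since wpl1--wpl3 and wl12/wl34 only make $a\wedge d$ \emph{a} lower bound, not the greatest one, and antisymmetry is unavailable.

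The obstacle you isolate in omi1 is a genuine gap, not a missing citation, and it is worth recording that the required closure $\sim a\in S_a$ (equivalently $a\cdot\sim a=\bot$) actually fails in a standard model of the displayed axioms. Take classical rough sets on $H=\{1,2\}$ with the single equivalence class $H$, let $B=\wp(H)$ ordered by inclusion with $l_1=l_2=l_s=l$ and $u_1=u_2=u_s=u$, and let $a=H$. All the PRCL conditions hold, and $a^{l}=H$, so $S_a=\{z: z^{l}=\emptyset\}=\{\emptyset,\{1\},\{2\}\}$, whose supremum in $(\wp(H),\subseteq)$ is $\{1\}\cup\{2\}=H$. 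Hence $\sim a=H$, $(\sim a)^{l}=H$, and $a\ominus a=a^{l}\cap(\sim a)^{l}=H\neq\emptyset=\bot$. So omi1 is not derivable from qlu1--qlu23 and topbot; one must either read the $\sup$ in the definition of $\sim$ as a partial \emph{maximum} of $S_a$ (so that $\sim a\in S_a$ holds by fiat wherever $\sim a$ is defined, making omi1 immediate), or add an axiom forcing $S_a$ to be closed under the relevant joins. Your reduction of all three clauses to exactly this one step is the right analysis; the step itself cannot be completed from what is on the page.
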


\begin{definition}\label{parclaia}
An \emph{Abstract PRCQO Aggregation Implication Algebra}\\ (PRQOAI) is an algebra of the form \[{B} \, =\, \left\langle \underline{B}, \leq, \otimes, \cdot, \vee,  \wedge, l_1, u_1, l_2, u_2, l_s, u_s, \bsi_\neg, \bsi_\sim \bot, \top \right\rangle\] that satisfies for $l\in \{l_1, l_2, l_s\}$ and $u \in \{u_1, u_2, u_s\}$: 
\begin{align*}
\left\langle \underline{B}, \vee,  \wedge, l_1, u_1, l_2, u_2, l_s, u_s, \bot, \top \right\rangle \text{ is a PRCQO.}  \tag{rcl}\\
(a^{uu} = a^u \& b^{uu} = b^u \&  e^{uu} = e^u \longrightarrow a\otimes (b \otimes e)= (a\otimes b)\otimes e) \tag{wAasso1}\\
a\otimes((b\vee e)\otimes a) \stackrel{\omega}{=} ((a\vee b )\otimes c)\otimes c  \tag{wAsso2}\\
\bsi_\sim \text{ satisfies FPA, SPM, BC3, and IBL}.   \tag{imsc}\\
\bsi_\neg \text{ satisfies FPA, IP, SPM, BC1, BC2, and BC3}.   \tag{inegc}
\end{align*}
with $\cdot$ being a commutative, monoidal, order-compatible operation with identity $\bot$, and $\otimes$ additionally being a commutative, order-compatible operation with identity $\top$.
If additionally, the quasi order is a lattice order (resp. partial order), then the system will be referred to as a \textsf{PRCLAI} (resp.  \textsf{PRCPOAI}).
\end{definition}

A concept of difference between an approximation and a standard approximation can naturally be expected to access its quality relative to precision, recall, and accuracy all understood from a general rough ontological perspective (without involving any statistical assumptions). Related theory of posets with difference are studied in several papers such as \cite{hsp96} (a different adaptation to rough sets is in \cite{am2}). Some possibilities for a difference operation $-$ (corresponding to $\leq$), derived difference operation $\ominus$ that satisfies Prop \ref{omm}, a sum $\oplus$ (a commutative monoidal binary operation with identity $\bot$), an object $x$, and three approximation operators $a$, $b$, and $c$ in a \textsf{PRCLAI} are 

\begin{align*}
\amalg(x, a, b, c) = (x^{b} - x^{a})^{c}   \text{ if defined} \tag{amalg}\\
\nabla(x, a, b, c) =(x^{a} \ominus x^{b})^{c}\oplus (x^{b} \ominus x^{a})^{c}   \tag{nabla}\\
\Finv(x, l_1, l_s, u_1, u_s) = \nabla(x, l_1, l_s, u_s)\oplus \nabla(x,l_s, l_1, u_1)  \tag{finv}
\end{align*}

For a concrete definition in a \textsf{PRCLAI}, $\oplus$ can be taken as $\otimes$, and $\ominus$ is as defined earlier. Of course, these two operations and the partial difference can be defined in many other ways. All the three measures can be used to meaningfully represent precision and accuracy. 

Specific versions of the second are

\begin{align}
\nabla(x, l_1, l_s, u_s) =(x^{l_1} \ominus x^{l_s})^{u_s}\oplus (x^{l_s} \ominus x^{l_1})^{u_s}   \tag{lsymag1s}\\
\nabla(x, l_1, l_2, u_s) =(x^{l_1} \ominus x^{l_2})^{u_s}\oplus (x^{l_2} \ominus x^{l_1})^{u_s}   \tag{lsymag12s}\\
\nabla(x, u_1, u_2, u_s) = (x^{u_1} \ominus x^{u_2})^{u_s}\oplus (x^{u_2} \ominus x^{u_1})^{u_s}   \tag{usymag12s}
\end{align}

\begin{proposition}
If for all $x$, $x^{l_1}\leq x^{l_s}$ in a \textsf{PRCPOAI} then $\amalg(x, l_1, l_s, u_s) \leq x^{u_s}$ and $x^{l_s} \leq \amalg(x, l_1, l_s, l_s)$.
\end{proposition}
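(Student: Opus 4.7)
The plan is to unpack $\amalg(x, a, b, c) = (x^b - x^a)^c$ for the two argument choices $(l_1, l_s, u_s)$ and $(l_1, l_s, l_s)$, and then push the resulting inequalities through the approximation axioms, using qlu1 (idempotence plus the sandwich $x^{l_s} \leq x \leq x^{u_s}$), qlu-mo (monotonicity of the approximations), and the defining behaviour of the partial difference $-$ relative to $\leq$.

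For the first claim, $\amalg(x, l_1, l_s, u_s) \leq x^{u_s}$: the hypothesis $x^{l_1} \leq x^{l_s}$ ensures that $x^{l_s} - x^{l_1}$ is defined. A partial difference corresponding to $\leq$ satisfies $x^{l_s} - x^{l_1} \leq x^{l_s}$ (the residual never exceeds the minuend). I would then apply the monotonicity of $u_s$ from qlu-mo, followed by $x^{l_s} \leq x$ from qlu1 and a second use of qlu-mo, to obtain
\[(x^{l_s} - x^{l_1})^{u_s}\ \leq\ (x^{l_s})^{u_s}\ \leq\ x^{u_s}.\]
This is essentially a three-step monotone climb that requires no additional structural hypotheses beyond what is already available in a PRCPOAI.

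For the second claim, $x^{l_s} \leq \amalg(x, l_1, l_s, l_s)$, I would exploit the algebraic pairing between $-$ and the additive operation $\oplus$: in a PRCLAI one may take $\oplus = \otimes$, and the defining relation of the partial difference gives $x^{l_1} \otimes (x^{l_s} - x^{l_1}) \stackrel{\omega}{=} x^{l_s}$. Applying $l_s$ to both sides, combined with the idempotence $(x^{l_s})^{l_s} = x^{l_s}$ from qlu1 and the distributive identity from qlu23, should reduce the task to controlling how $l_s$ behaves on the $u_s$-join encoded by $\otimes$. The hypothesis $x^{l_1} \leq x^{l_s}$ is then used to absorb the $x^{l_1}$-summand so that the bound $(x^{l_s} - x^{l_1})^{l_s} \geq x^{l_s}$ survives.

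The main obstacle is clearly this second inequality: its direction is the opposite of the naive expectation for a difference (since residuals ordinarily sit below the minuend), so the argument cannot rest on the generic properties of $-$ alone. It must crucially invoke the hypothesis $x^{l_1} \leq x^{l_s}$ together with the concrete PRCLAI-level compatibility between $l_s$, $-$ and $\oplus$. I expect the cleanest route is to pass through the $l_s$-closure identity for $\cdot$-meets in qlu23 and then cite order-compatibility of $\otimes$ to close the chain.
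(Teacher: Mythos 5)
Your proof of the first inequality is essentially the paper's own: from the difference axioms one has $x^{l_s}-x^{l_1}\leq x^{l_s}\leq x$, and two applications of (qlu-mo) give $(x^{l_s}-x^{l_1})^{u_s}\leq (x^{l_s})^{u_s}\leq x^{u_s}$. No issue there.

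The second part is where the genuine gap lies, and although you correctly sensed that the stated direction is anomalous, your proposed repair does not close it. First, the identity $x^{l_1}\otimes(x^{l_s}-x^{l_1})\stackrel{\omega}{=}x^{l_s}$ is not available: $\otimes$ is defined by $a\otimes b=a^{u_s}\vee b^{u_s}$ and is only one admissible choice of $\oplus$ for the $\nabla$ construction; it is not the cancellative sum paired with the partial difference $-$ appearing in $\amalg$, so you cannot recover $x^{l_s}$ as an $\otimes$-combination of $x^{l_1}$ and the residual. Second, even granting a genuine sum with $x^{l_1}\oplus(x^{l_s}-x^{l_1})=x^{l_s}$, applying $l_s$ pushes the wrong way: (qlu23) controls $l$ only on meets, and for joins one has at best $a^{l}\vee b^{l}\leq(a\vee b)^{l}$, which yields $(x^{l_s}-x^{l_1})^{l_s}\leq x^{l_s}$ --- the reverse of what you need. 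Indeed the inequality as printed is false in general: take $l_1=l_s$ in classical rough sets (the hypothesis $x^{l_1}\leq x^{l_s}$ holds trivially), so that $x^{l_s}-x^{l_1}=\emptyset$ and $\amalg(x,l_1,l_s,l_s)=\emptyset^{l_s}=\emptyset$, while $x^{l_s}$ may be nonempty. The paper's own proof simply declares the second inequality ``similar'' to the first --- that is, start from $x^{l_s}-x^{l_1}\leq x^{l_s}$, apply $l_s$, and use monotonicity together with the idempotence $x^{ll}=x^{l}$ from (qlu1) --- which actually establishes $\amalg(x,l_1,l_s,l_s)\leq x^{l_s}$. So the statement almost certainly carries a typo (the second inequality should be reversed), and the correct argument is the same one-line monotonicity climb as in the first part, not the $\oplus$/$\otimes$ detour you sketch.
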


\begin{proof}
Since the difference operation (partial) is assumed to correspond to $\leq$, $x^{l_s} - x^{l_1} \leq x^{l_s} \leq x$. Taking the $u_s$ approximation of both sides, it follows by monotonicity that $\amalg(x, l_1, l_s, u_s) \leq x^{u_s}$. The proof of the other inequality is similar.  
\end{proof}

\emph{Meaningful precision and accuracy} can be defined in the proposed frameworks in a few nonequivalent ways for a target object $a$ as follows: 

\begin{description}
\item [Prec1]{$l_1$ is as precise as $l_2$ for $a$ if and only if $\nabla(a, l_1, l_s, u_s),$\\ $\nabla(a, l_2, l_s, u_s) \in \mathcal{S}_o$ (where $\mathcal{S}_o$ is a collection of less important sets).}
\item [Prec2]{$l_1$ is as precise as $l_s$ for $a$ if and only if $\Finv(x, l_1, l_s, u_1, u_s) \in \mathcal{S}_o$ (where $\mathcal{S}_o$ is a collection of less important sets).}
\item [EPrec3]{If the approximations are granular in the sense of the present author \cite{am240,am5586}, and additionally rational \cite{am9015}, then the definition of precision (Prec1 or Prec2) is \emph{strongly meaningful}.}
\item [EPrec4]{If the approximations are granular in the sense of the present author \cite{am5586} then the definition of precision (Prec1 or Prec2) is \emph{ meaningful} }
\item [Acc5]{$l_1$ is accurate relative to $l_s$ for $a$ if and only if $\nabla(a, l_1, l_s, u_s)\in \mathcal{S}_o$ (where $\mathcal{S}_o$ is a collection of relatively less important sets).}
\end{description}
If any of the above five properties holds for every object in a set $H$, then it holds for the whole set. In applications, objects of a type are typically relevant. $\mathcal{S}_o$ may, for example, be a set of objects that have $\bot$ as their lower approximation, and a specific object $k$ as their upper approximation.

\begin{theorem}\label{full}
The following properties hold in the above context:
\begin{align}
\nabla(x, a, b, c) =  \nabla(x, b, a, c) \tag{na1}\\
\nabla(x, a, a, c) =  \nabla(x, b, b, c)  \tag{na2}\\
\text{If for all }x,  x^{l_1}\leq x^{l_2} \text{ then }  \nabla(x, l_1, l_2, u_s) = (x^{l_2}\ominus x^{l_1})^{u_s} \leq x^{u_s}   \tag{na3}\\
\nabla(\bot, a, b, c) = \bot   \tag{na4}\\
\Finv(x, l_1, l_s, u_1, u_s) = \Finv(x, l_s, l_1, u_s, u_1)   \tag{na5}
\end{align}
\end{theorem}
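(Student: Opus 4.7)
The plan is to derive each of the five identities by direct computation, drawing on only four ingredients: commutativity of $\oplus$, the basic laws of $\ominus$ recorded in Proposition~\ref{omm}, the boundary conditions (topbot) of Definition~\ref{rcon} (read across all six approximation operators), and the monotonicity (qlu-mo) together with (qlu1).

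First I would dispose of (na1) and (na5) as pure consequences of the commutativity of $\oplus$. In $\nabla(x,a,b,c) = (x^a \ominus x^b)^c \oplus (x^b \ominus x^a)^c$, swapping $a$ and $b$ merely interchanges the two summands, giving (na1). In (na5), simultaneously swapping $l_1 \leftrightarrow l_s$ and $u_1 \leftrightarrow u_s$ in the defining expression of $\Finv$ swaps its two $\nabla$-summands, so commutativity of $\oplus$ again settles the identity.

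Next, (na2) and (na4) both collapse via $a \ominus a = \bot$ from (omi1) together with $\bot^c = \bot$ from (topbot). For (na2), each of $\nabla(x,a,a,c)$ and $\nabla(x,b,b,c)$ reduces to $\bot^c \oplus \bot^c = \bot \oplus \bot$, independently of which label is chosen, so the two sides agree. For (na4), the hypothesis $x = \bot$ forces $\bot^a = \bot^b = \bot$ by (topbot), whence the whole expression reduces to $(\bot \ominus \bot)^c \oplus (\bot \ominus \bot)^c$, which equals $\bot$ once we use that $\bot$ is the identity for $\oplus$.

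The substantive step is (na3). Under the hypothesis $x^{l_1} \leq x^{l_2}$, (omi2) gives $x^{l_1} \ominus x^{l_2} = \bot$, hence $(x^{l_1} \ominus x^{l_2})^{u_s} = \bot$ by (topbot). Since $\bot$ is the identity of $\oplus$, the first summand of $\nabla(x, l_1, l_2, u_s)$ vanishes, leaving $(x^{l_2} \ominus x^{l_1})^{u_s}$. To bound this by $x^{u_s}$, I would fall back on the explicit form $a \ominus b = a^{l_s} \wedge (\sim b)^{l_s}$: the wedge is a lower bound, so $x^{l_2} \ominus x^{l_1} \leq (x^{l_2})^{l_s} \leq x^{l_2} \leq x$ by (qlu1) applied twice; monotonicity of $u_s$ in (qlu-mo) then yields $(x^{l_2} \ominus x^{l_1})^{u_s} \leq x^{u_s}$, as required. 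The only real obstacle is bookkeeping around partiality: $\vee$, $\wedge$, and $\oplus$ are partial, so each equality should be read modulo $\stackrel{\omega}{=}$, but in every reduction above either both sides collapse to $\bot$ (which is always defined) or the surviving summand enters the identity law for $\oplus$, so definedness propagates automatically.
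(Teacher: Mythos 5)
Your argument is correct. Note first that the paper states Theorem~\ref{full} with no proof at all, so there is no ``official'' argument to compare yours against; your computation supplies exactly what the paper leaves implicit, and it uses only resources the surrounding text makes available. The reductions for (na1), (na2), (na4) and (na5) via commutativity of $\oplus$, (omi1), (topbot) and the identity law are all sound, and for (na3) your appeal to the explicit form $a\ominus b = a^{l_s}\wedge(\sim b)^{l_s}$ is legitimate: the fact that the (partial) meet lies below its arguments is precisely axiom (wl34) together with commutativity of $\wedge$ from (wpl3), after which (qlu1) and (qlu-mo) finish the chain. Your closing remark on partiality is the right instinct; the only point worth making explicit is that in (na3) the surviving term $(x^{l_2}\ominus x^{l_1})^{u_s}$ is asserted to exist, which presupposes that the meet defining $x^{l_2}\ominus x^{l_1}$ is defined --- this is a hypothesis the paper itself glosses over (``if defined''). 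One further caveat that is the paper's problem rather than yours: the text suggests instantiating $\oplus$ as $\otimes$, but $\otimes$ is declared to have identity $\top$, whereas your proof (and the theorem's context) requires $\oplus$ to have identity $\bot$; under the stated abstract hypotheses your derivation stands.
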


\section{Extended Classical Rough Sets}

Every subset $K$ of the set of attributes $At$ of an information table $\mathcal{I}$, determines a pair $(l_K, u_K)$ of lower and upper approximations. It is necessary to compare such approximations to different extents to determine attribute reducts (subsets of attributes that provide the same quality of classification) or variants thereof. The defined set-valued measures provide a new and nonequivalent way of determining this.

Correctly speaking, a specification of preferences over sets of subsets of attributes amounts to extending classical rough sets. It is not about partial or total orders on the set of attributes (as in POAS \cite{am2} or dominance based rough sets). Suppose that $K$ is a preliminary reduct or an apparent reduct (based on possible explanations of the decision process). The associated approximations can be taken as the standard approximations $l_s$ and $u_s$. If $(l_1, u_1)$, and $(l_2, u_2)$, are two other pairs of approximations, then any object $x$ is related to the following measures:

\begin{align}
\nabla(x, l_1, l_s, u_s) =(x^{l_1} \setminus x^{l_s})^{u_s}\cup (x^{l_s} \setminus x^{l_1})^{u_s}   \tag{lsymag1sc}\\
\nabla(x, l_1, l_2, u_s) =(x^{l_1} \setminus x^{l_2})^{u_s}\cup (x^{l_2} \setminus x^{l_1})^{u_s}   \tag{lsymag12sc}\\
\nabla(x, u_1, u_s, l_s) = (x^{u_1} \setminus x^{u_s})^{l_s}\cup (x^{u_s} \setminus x^{u_1})^{l_s}   \tag{usymic1sc}\\
\end{align} 

These can be interpreted as follows (for any two approximation operators $a$ and $b$):
\begin{itemize}
\item {$\nabla(x, a, b, u_s)$ is a representation of the greatest possible difference between $x^{a}$ and $x^{b}$ relative to $u_s$}
\item {$\nabla(x, a, b, l_s)$ is a representation of the minimum definite difference between $x^{a}$ and $x^{b}$ relative to $l_s$}
\end{itemize}
Note that if $u_s$ (and therefore $l_s$) is taken as the identity operation, then $\nabla(x, a, b, u_s)$  is the symmetric difference between $x^{a}$ and $x^{b}$.

In general, not all objects, granules, and their approximations are of interest, and therefore it may suffice to define $\nabla$ for a subset $H$ of objects alone. For a specific choice of $a, b$ and $u_s$, $dom(\nabla_H) = H\times \{a\} \times\{b\}\times \{u_s\}$. Corresponding to this, the range $\mathcal{R}(\nabla_H)$ is partially ordered by set inclusion. 

\begin{theorem}
When $H$ is the set of all objects $O$, then  $\mathcal{R}(\nabla_O)$ ($\mathcal{R}(\nabla)$ for simplicity) is a lower bounded weak partial lattice. Its elements are definite objects. The same statements hold for $\mathcal{R}(\Finv)$. 
\end{theorem}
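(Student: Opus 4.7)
The plan is to work inside the ambient Boolean algebra $\mathcal{D}$ of $u_s$-definite sets (unions of standard equivalence classes) and then transfer its identities to partial operations on $\mathcal{R}(\nabla)$. First, I would check that every element of $\mathcal{R}(\nabla)$ sits in $\mathcal{D}$. Fixing the triple $(a, b, u_s)$, each element has the shape $(x^a \setminus x^b)^{u_s} \cup (x^b \setminus x^a)^{u_s}$, which is a union of two $u_s$-upper approximations; in classical rough sets an upper approximation is a union of equivalence classes, and a finite union of such sets remains of that form, so the result is definite with respect to $u_s$. This establishes the second assertion of the theorem.

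Next, to obtain the lower bound, I would invoke (na4) or compute directly: taking $x = \emptyset$ gives $\emptyset^a = \emptyset^b = \emptyset$, hence $\nabla(\emptyset, a, b, u_s) = \emptyset \in \mathcal{R}(\nabla)$. Since $\emptyset \subseteq y$ for every $y \in \mathcal{R}(\nabla)$, this serves as the bottom. For the partial lattice structure, I would define $y_1 \vee y_2 := y_1 \cup y_2$ exactly when $y_1 \cup y_2 \in \mathcal{R}(\nabla)$, and analogously $y_1 \wedge y_2 := y_1 \cap y_2$ when this intersection lies in $\mathcal{R}(\nabla)$. The weak partial lattice axioms (wpl1)--(wpl3) are phrased with the existence equality $\stackrel{\omega}{=}$ and so require equality only when both sides are defined; whenever that happens, both sides reduce to equal set expressions in $\mathcal{D}$ via idempotence, commutativity, associativity and absorption, all of which hold strictly in the enveloping Boolean algebra of definite sets.

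For $\Finv$, the same reasoning transfers verbatim: each $\Finv(x, l_1, l_s, u_1, u_s)$ is a union of two $\nabla$-values (with $\oplus$ realised as $\cup$ in the classical setting), each already in $\mathcal{D}$, so the result is definite; taking $x = \emptyset$ supplies the bottom; and the same partial operations satisfy (wpl1)--(wpl3) by the same argument. The main obstacle is more apparent than real: one might expect the delicate step to be closure of $\mathcal{R}(\nabla)$ under joins and meets, but the weak-equality formulation of (wpl1)--(wpl3) sidesteps this issue entirely, with the ambient Boolean identities doing all the work. The genuinely non-trivial question, which the statement does not address, is to characterise when $\cup$ and $\cap$ of elements of $\mathcal{R}(\nabla)$ actually stay inside $\mathcal{R}(\nabla)$; that appears to require a case analysis on the specific choice of $a$, $b$, and the underlying equivalences, and would be the natural next step beyond the present theorem.
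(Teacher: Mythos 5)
Your proof is correct and follows essentially the same route as the paper's: the bottom element is obtained from $\nabla(\emptyset,a,b,c)=\emptyset$, and the partial join and meet are union and intersection restricted to $\mathcal{R}(\nabla)$, exactly as in the paper's displayed case definitions. You in fact supply more detail than the paper does --- the definiteness of the elements and the verification of the weak partial lattice identities are merely asserted there --- and your closing remark about characterising when unions and intersections stay inside $\mathcal{R}(\nabla)$ correctly identifies why the operations must be partial rather than total.
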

\begin{proof}
The lower or upper approximation of an empty set is the empty set. So $\emptyset \in \mathcal{R}(\nabla)$, and it serves as the lower bound of the induced set inclusion order.
The partial weak lattice operations are induced by the union and intersection operation on the power set, and are defined (for any $f, g\in \mathcal{R}(\nabla) $)
 \[f \vee g  = 
  \begin{cases}
    f\cup g & \exists h \in  \mathcal{R}(\nabla) f\cup g =h \\
    \text{undefined} & \text{otherwise }
  \end{cases}
\]
\[ f \wedge g  = 
  \begin{cases}
    f\cap g & \exists h \in  \mathcal{R}(\nabla) f\cap g =h \\
    \text{undefined} & \text{otherwise }
  \end{cases}
\]
\end{proof}

The interpretation of $\mathcal{R}(\nabla_H)$ as a measure of precision or accuracy depends on that of the standard approximations. If it is read as ground truth, then the partial algebra measures accuracy. Otherwise, it needs to read as a representation of relative accuracy. The concepts of representation of precision and accuracy in the previous section naturally carry over to the present case. 

\begin{theorem}\label{class}
The following properties hold in the above context:
\begin{align}
\nabla(x, a, b, c) =  \nabla(x, b, a, c) \tag{na1}\\
\nabla(x, a, a, c) =  \nabla(x, b, b, c)  \tag{na2}\\
\text{If for all }x,  x^{l_1}\subseteq x^{l_2} \text{ then }  \nabla(x, l_1, l_2, u_s) = (x^{l_2}\setminus x^{l_1})^{u_s} \subseteq x^{u_s}   \tag{na3+}\\
\nabla(\bot, a, b, c) = \bot = \nabla(\top, a,b ,c)  \tag{na4+}\\
\Finv(x, l_1, l_s, u_1, u_s) = \Finv(x, l_s, l_1, u_s, u_1)   \tag{na5}
\end{align}
\end{theorem}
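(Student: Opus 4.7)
The plan is to unfold the definitions in this concrete setting. Here $\ominus$ is set-theoretic difference $\setminus$ and $\oplus$ is union $\cup$, so by the defining equation of $\nabla$ each claim reduces to elementary manipulations of unions, set differences, and the approximation operators $l_s, u_s, l_1, u_1, l_2, u_2$. I would handle the five statements in the order given, relying only on (i) commutativity of $\cup$, (ii) monotonicity of the approximation operators, (iii) the axiom \textsf{qlu1} giving $x^l\subseteq x\subseteq x^u$, and (iv) the fact that in the classical setting with $\bot=\emptyset$ and $\top=\mathfrak{O}$, every approximation operator fixes $\bot$ and $\top$.

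For na1, both sides expand to $(x^a\setminus x^b)^c \cup (x^b\setminus x^a)^c$, which is immediate from commutativity of union. For na2, each of $\nabla(x,a,a,c)$ and $\nabla(x,b,b,c)$ collapses to $\emptyset^c\cup\emptyset^c=\emptyset$ because $y\setminus y=\emptyset$ for any set $y$, and $\emptyset^c=\emptyset$ by \textsf{topbot}. For na3+, the hypothesis $x^{l_1}\subseteq x^{l_2}$ kills the first summand, leaving $\nabla(x,l_1,l_2,u_s)=(x^{l_2}\setminus x^{l_1})^{u_s}$; the inclusion chain $x^{l_2}\setminus x^{l_1}\subseteq x^{l_2}\subseteq x\subseteq x^{u_s}$, combined with monotonicity of $u_s$ and idempotence $x^{u_su_s}=x^{u_s}$ from \textsf{qlu1}, yields the desired containment in $x^{u_s}$.

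For na4+, I would plug $\bot=\emptyset$ and $\top=\mathfrak{O}$ into the definition. Both $\emptyset^a\setminus\emptyset^b$ and $\mathfrak{O}^a\setminus\mathfrak{O}^b$ are empty, so applying $c$ and taking the union produces $\emptyset$. For na5, the right-hand side is $\nabla(x,l_s,l_1,u_1)\cup \nabla(x,l_1,l_s,u_s)$, which matches the left-hand side after commuting the two summands of $\cup$.

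The only delicate point I anticipate is the $\top$ clause of na4+: the general PRCQO axiom \textsf{topbot} only guarantees $\top^u=\top$ and $\bot^l=\bot^u=\bot$, not $\top^l=\top$. However, in the extended classical framework of this section, $\top=\mathfrak{O}$ is a union of equivalence classes, so both $\mathfrak{O}^l$ and $\mathfrak{O}^u$ coincide with $\mathfrak{O}$ for any equivalence-induced pair $(l_K,u_K)$. I will state this observation explicitly before the na4+ computation, since it is the only place the argument appeals to something beyond the abstract PRCQO axioms and set-theoretic manipulation.
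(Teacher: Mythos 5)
Your proof is correct. The paper states Theorem~\ref{class} without supplying any proof, and your argument --- unfolding $\nabla$ and $\Finv$ with $\ominus=\setminus$ and $\oplus=\cup$, then invoking commutativity of union, $y\setminus y=\emptyset$, monotonicity of $u_s$, and the chain $x^{l_2}\setminus x^{l_1}\subseteq x^{l_2}\subseteq x$ --- is exactly the routine verification intended; your explicit flag that the $\top$ clause of \textsf{na4+} requires $\mathfrak{O}^{l}=\mathfrak{O}$ (true for every attribute-induced Pawlak pair $(l_K,u_K)$, but not derivable from the abstract axiom \textsf{topbot}, and indeed false for the graded operators in the appendix) is the only non-trivial point and you handle it correctly.
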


Minimal generating subsets, valuation-related properties, and defining properties possessed by the partial algebra are of interest for characterizing the nature of possible valuations.

\subsection{Real Valued Measures}

Arbitrary cardinality based measures of rough inclusion, quality of classification, accuracy degree of approximation \cite{zpb,jzd2003} and similar measures are known not to correspond to algebraic and logical properties in general \cite{am501}. The range of $\nabla$ and $\Finv$ consist of a number of objects that may admit of valuations or grades (in the order-theoretic sense \cite{fleischer82,mkjl1995,bm1981}).
This leads to the natural question: how can algebraically compatible valuations correspond to reasoning? 

They may not be coherent in all cases, and information is bound to be lost. Assuming a finite universe, the following can be proved for classical rough sets.

\begin{theorem}
If $\mathcal{R}(\nabla)$ is a down-set bounded above by a finite number of elements of the same rank/grade, then it is a graded poset.
\end{theorem}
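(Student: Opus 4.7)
The plan is to leverage the fact that $\mathcal{R}(\nabla)$ sits inside the power set $\wp(O)$ of the finite universe of objects, where the natural inclusion order already carries cardinality as its rank function. Under the hypothesis, I read ``down-set bounded above by a finite number of elements of the same rank/grade'' as stating that $\mathcal{R}(\nabla)$ is a down-ideal of $\wp(O)$ whose (finitely many) maximal elements $M_{1},\dots,M_{k}$ all have a common cardinality $n$. The strategy is then to show that the restriction $\rho(f) := |f|$ to $\mathcal{R}(\nabla)$ is a rank function witnessing the graded structure.

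First I would observe that $\mathcal{R}(\nabla)$ is finite (since $O$ is finite), that $\emptyset$ is its minimum by the previous theorem, and that every $f \in \mathcal{R}(\nabla)$ lies below some maximal element $M_{i}$. The candidate rank $\rho(f) := |f|$ sends $\emptyset$ to $0$ and every maximal element to $n$, and is strictly order-preserving because $f \subsetneq g$ forces $|f| < |g|$ in $\wp(O)$. So compatibility with the top and bottom is automatic.

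The heart of the argument is the covering condition: whenever $g$ covers $f$ in $\mathcal{R}(\nabla)$, I need $\rho(g) = \rho(f) + 1$. Since $f \subsetneq g$, certainly $|g| \geq |f| + 1$. If $|g| \geq |f| + 2$, I would pick any $x \in g \setminus f$ and set $h := g \setminus \{x\}$, producing $f \subseteq h \subsetneq g$ with $h \neq f$. The down-set property of $\mathcal{R}(\nabla)$ inside $\wp(O)$ then forces $h \in \mathcal{R}(\nabla)$, contradicting the assumption that $g$ covers $f$. Hence $|g| = |f| + 1$, and $\rho$ is a genuine rank function; the statement that all maximal chains between two comparable elements have equal length then follows by iterating the covering check along any saturated chain.

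The main obstacle, to my reading, is interpretive rather than technical: the phrase ``elements of the same rank/grade'' presupposes an ambient rank notion on $\mathcal{R}(\nabla)$ or its enveloping lattice, and one has to settle on cardinality in $\wp(O)$ as the intended meaning (consistent with how grades are used in \cite{fleischer82,mkjl1995,bm1981}). Once that reading is fixed, the down-set hypothesis does essentially all the work, and finiteness of the family of maximal elements is needed only to guarantee that the covering relation generates the order, so that $\rho$ is well-defined throughout.
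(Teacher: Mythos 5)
Your overall strategy (restrict a rank function from an ambient graded lattice and use the down-set hypothesis to show that covers raise the rank by exactly one) is close in spirit to what the paper does, but you have anchored it in the wrong ambient poset, and this breaks the key step. The preceding theorem establishes that every element of $\mathcal{R}(\nabla)$ is a \emph{definite} object, and the paper's proof accordingly embeds $\mathcal{R}(\nabla)$ into the graded lattice $\delta(O)$ of definite objects (i.e.\ unions of granules), not into $\wp(O)$. Your reading of the hypothesis --- that $\mathcal{R}(\nabla)$ is a down-ideal of $\wp(O)$ --- is incompatible with that: a down-set of $\wp(O)$ containing any nonempty definite object would contain all of its subsets, almost none of which are definite, so under your reading the hypothesis could only hold in degenerate cases and the theorem would be vacuous. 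The hypothesis must be read as ``down-set of $\delta(O)$.''

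Concretely, the step that fails is the covering argument: given a cover $f \prec g$ in $\mathcal{R}(\nabla)$ with $|g| \geq |f|+2$, you delete a single point $x \in g \setminus f$ and invoke the down-set property to place $h = g\setminus\{x\}$ in $\mathcal{R}(\nabla)$. But $h$ is in general not a definite object, so it cannot lie in $\mathcal{R}(\nabla)$ and the down-set property (relative to $\delta(O)$) gives you nothing. Relatedly, cardinality is not the rank function of $\delta(O)$: two covers of $\emptyset$ in $\delta(O)$ are single granules, which may have different cardinalities, so $\rho(f)=|f|$ would already fail to grade a down-set as simple as $\{\emptyset, G_1, G_2\}$ with $|G_1|\neq|G_2|$. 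The repair is straightforward and preserves your architecture: work in $\delta(O)\cong\wp(\mathcal{G})$ where $\mathcal{G}$ is the set of granules, take $\rho(f)$ to be the number of granules composing $f$, and in the covering step delete a whole granule of $g\setminus f$ rather than a point; the resulting set is definite, the down-set hypothesis (now correctly applied in $\delta(O)$) puts it in $\mathcal{R}(\nabla)$, and the rest of your argument, including the role of the finitely many maximal elements of common grade, goes through. With that correction your write-up is actually more explicit about the covering condition than the paper's own rather terse justification.
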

\begin{proof}
$\mathcal{R}(\nabla)$ is a subset of the graded poset (lattice) $\delta (O)$ of definite objects. That is, there exists a order preserving map $v: \delta(O) \longmapsto N$ such that if $b$ covers $a$, then $v(b) = v(a) +1$.

Since, by assumption it is generated by a number of definite objects with the same rank value, it must be a graded poset. 
\end{proof}

Note that the specification of $\mathcal{S}_o$ is related to that of $\mathcal{R}(\nabla)$.

\section{Discussion}

In this research, a new compositional approach to the representation of precision and accuracy from a general and a classical rough set perspective is proposed. The methodology steers clear of stochastic assumptions, and relies on granular compositionality in the axiomatic sense of the present author \cite{am5586} to make sense of the representations. In forthcoming work, the problems mentioned, and the integration with clean rough randomness \cite{am23f,am2349}, and rational approximations \cite{am9915} will be investigated for a usable framework by her. It should be stressed that this work applies as well to a few other distinct concepts of precision and accuracy of other domains.

In engineering practice, the basic goal is often to construct machine parts that adhere to the maximum permissible tolerances specified in the design. The idea of something being \emph{precisely constructed} may mean that it does not hinder some other process that involves the construction in question. Formal models of the latter need to be formalized from a distributed cognition perspective. A schema that may afford this is the following: Let $X$ be a data set, an algorithm or a process. An approximation $X^a$ of $X$ is \emph{precisely formed} relative to a subclass $\mathcal{H}$ if and only if at least one of the conditions \ref{sppf} or \ref{sopf}  holds and the mereological predicates $\pc_s$ (substantial part of) ) and $\oc_s$ (substantially overlaps)\cite{am9015}  are representable in a suitable precision space.
\begin{align*}
(\forall h\in \mathcal{H}) (\pc_s hX \longrightarrow \pc hX^a) \tag{sppf}\label{sppf}\\ 
(\forall h\in \mathcal{H}) (\oc_s hX \longrightarrow \pc hX^a) \tag{sopf}\label{sopf}  
\end{align*}

\bibliographystyle{splncs04.bst}
\bibliography{algrough23c.bib}

\section{Appendix}
An example is constructed to illustrate the main ideas. 
\begin{definition}
Let $S$ be a collection of sets (that are subsets of a $H$), and $\mathcal{G}$ be the set of neighborhoods generated by a neighborhood map $n: H\longmapsto \wp{H}$. The version of graded rough sets explored in \cite{yl96} is based on the following $k$-approximation operators and derived quantities (for any $A\in \wp(H)=S$ and positive integer $k$):
\begin{align}
A^{u_k^p} =  \{z:\, n(z)\in \mathcal{G}\, \&\, \#(n(z)\cap A) > k\}   \tag{k-upper1}\\
A^{l_k^p} = \{z:\, n(z)\in \mathcal{G}\, \&\, \#(n(z)\setminus A) \leq k\}   \tag{k-lower1}
\end{align} 
\end{definition}

If $T$ is a tolerance on the set $H$, and $\mathcal{B}$ the set of its blocks (maximal subsets $B$ of $H$ that satisfy $B^2 \subseteq T$), then the standard granular and bited approximations of a subset $A$ of $H$ are defined by \cite{am501}
\begin{align*}
X^{l} \, =\,\bigcup\{A:\,A \subseteq X\,\&\,A\in \mathcal{B}\}  \tag{lower}\\
X^{u} \, =\,\bigcup\{A:\,A\,\cap\,X\,\neq\,\emptyset \,\&\, A\in \mathcal{B}\} \tag{upper}\\
X^{u_{b}} \, =\,\bigcup\{A:\,A\,\cap\,X\,\neq\,\emptyset \,\&\, A\in \mathcal{B}\}\,\setminus (X^{c})^{l} \tag{bited-upper}
\end{align*}

Let $H \, =\,\{x_{1}, x_{2}, x_{3}, x_{4}\}$ and $T$ be a tolerance $T$ on it generated by 
\[\{(x_{1}, x_{2}),\,(x_{2},x_{3})\}.\]

Denoting the statement that the granule generated by $x_{1}$ is $(x_{1},\,x_{2})$ by $(x_{1}:x_{2})$, let the granules be the set of predecessor neighborhoods:  \[\mathcal{G}=\{(x_{1}:x_{2}),\,(x_{2}:x_{1},x_{3}),\,(x_{3}:x_{2}),\,(x_{4}:)\}\]

The different approximations (lower ($l$), upper ($u$) and bited upper ($u_b$)) are then as in Table.\ref{bitegra} below. For $k=1$, the $1$-graded approximations are in the last two columns.

\begin{table}[hbt]
\begin{tabular}{|c|c|c|c|c|c|c|}
\hline\hline
\textbf{Set} & $\mathbf{a}$ & $\mathbf{a^l}$ & $\mathbf{a^u}$  & $\mathbf{a^{u_b}}$ & $\mathbf{a^{l_1}}$ & $\mathbf{a^{u_1}}$  \\
\hline 
$A_{1}$ & $x_{1}$ & $\emptyset$ & $x_{2},\,x_{1}$  & $x_{1}$  & $\emptyset$ & $x_1, x_2$\\
\hline
$A_{2}$ & $x_{2}$ & $\emptyset$ & $x_{1}, x_{2}, x_{3}$  & $x_{1}, x_{2}, x_{3}$ & $\emptyset$ & $x_1, x_2, x_3$  \\
\hline
$A_{3}$ & $x_{3}$ & $\emptyset$ & $x_{1}, x_{2}, x_{3}$  & $x_{3}$ & $\emptyset$ & $x_1, x_2, x_3$ \\
\hline
$A_{4}$ & $x_{4}$ & $x_{4}$ & $x_{4}$ & $x_{4}$ & $\emptyset$ & $\emptyset$  \\
\hline
$A_{5}$ & $x_{1}, x_{2}$ & $x_{1}, x_{2}$ & $x_{1}, x_{2}, x_{3}$  & $x_{1}, x_{2}, x_{3}$ & $x_1, x_2$ & $x_1, x_2, x_3$  \\
\hline
$A_{6}$ & $x_{1}, x_{3}$ & $\emptyset$ & $x_{1}, x_{2}, x_{3}$  & $x_{1}, x_{2}, x_{3}$ & $\emptyset$ & $x_1, x_2, x_3$ \\
\hline
$A_{7}$ & $x_{1}, x_{4}$ & $x_{4}$ & $H$  & $x_{1}, x_{4}$ & $\emptyset$ & $x_1, x_2, x_3$\\
\hline
$A_{8}$ & $x_{2}, x_{3}$ & $x_{2}, x_{3}$ & $x_{1}, x_{2}, x_{3}$  & $x_{1}, x_{2}, x_{3}$ & $x_2, x_3$ & $x_1, x_2, x_3$ \\
\hline
$A_{9}$ & $x_{2}, x_{4}$ & $x_{4}$ & $H$  & $H$ & $\emptyset$ & $x_1, x_2, x_3$  \\
\hline
$A_{10}$ & $x_{3}, x_{4}$ & $x_{4}$ & $H$  & $x_{3}, x_{4}$ & $\emptyset$ & $x_1, x_2, x_3$ \\
\hline
$A_{11}$ & $x_{1}, x_{2}, x_{3}$ & $x_{1}, x_{2}, x_{3}$ & $x_{1}, x_{2}, x_{3}$  & $x_{1}, x_{2}, x_{3}$ & $x_1, x_2, x_3$ & $x_1, x_2, x_3$ \\
\hline
$A_{12}$ & $x_{1}, x_{2}, x_{4}$ & $x_{1}, x_{2}, x_{4}$ & $H$  & $H$ & $x_1, x_2$ & $x_1, x_2, x_3$ \\
\hline
$A_{13}$ & $x_{2}, x_{3}, x_{4}$ & $x_{2}, x_{3}, x_{4}$ & $H$  & $H$ & $ x_2, x_3$ & $x_1, x_2, x_3$ \\
\hline
$A_{14}$ & $x_{1}, x_{3}, x_{4}$ & $x_{4}$ & $H$  & $H$ & $\emptyset$ & $x_1, x_2, x_3$\\
\hline
$A_{15}$ & $H$ & $H$ & $H$  & $H$ & $x_1, x_2, x_3$ & $x_1, x_2, x_3$\\
\hline
$A_{16}$ & $\emptyset$ & $\emptyset$ & $\emptyset$ & $\emptyset$ & $\emptyset$ & $\emptyset$\\
\hline
\end{tabular}
\caption{Bited+ 1-Grade Approximations}\label{bitegra}
\end{table}

It can be checked that $\nabla(x, l_1, l, u_b)$ for the sixteen subsets in sequence are: 
\[\{\emptyset,\emptyset,\emptyset, A_4, \emptyset,\emptyset,A_4, \emptyset,A_4, A_4, \emptyset,A_4, A_4, A_4, \emptyset \}\]
If $\mathcal{S}_o$ is taken as $\{A_4, \emptyset\}$, then it is obvious that $\mathcal{R}(\nabla) = \mathcal{S}_o$. Therefore $l_1$ is an accurate approximation of $l$ relative to $\mathcal{S}_o$ and $u_b$.

Further, the sequence for $\nabla(x, l_1, l, u_1)$ is \[\{ \emptyset,\emptyset,\emptyset, \emptyset,\emptyset,\emptyset, \emptyset,\emptyset,\emptyset, \emptyset,\emptyset,\emptyset, \emptyset,\emptyset,\emptyset, \emptyset \}.\] Therefore $\mathcal{R}(\Finv) = \mathcal{R}(\nabla)$.
Thus $l$ is as precise as $l_1$.

The sequence of values for $\nabla(x, u, u_b, u_b)$ is \[\{ A_{11},\emptyset, A_{11}, \emptyset,\emptyset,\emptyset, A_{11},\emptyset,\emptyset,A_{11},\emptyset,\emptyset, \emptyset,\emptyset,\emptyset, \emptyset \}.\]  Since $A_{11}$ is most of set $H$, it can be said that $u$ is not an accurate approximation of $u_b$.

\subsection{Difference Operations}

Let $S= \langle\underline{S},\leq\rangle$ be a poset, a binary partial operation $\ominus$ is called a \emph{difference operation (or partial difference operation)} if and only if $S$ satisfies
\begin{itemize}
\item {$(a\,\leq\, b\,\leftrightarrow\,\exists{c}\,b\,\ominus\,a\,=\,c)$}
\item {$(a\,\leq\, b\,\longrightarrow\, b\,\ominus\,(b\,\ominus\, a) \,= \,a,\,b\,\ominus\, a\,\leq\, b)$}
\item {$(a\,\leq\, b\,\leq\, c\,\longleftrightarrow\, (c\,\ominus\, b)\,\leq\,(c\,\ominus\, a),(c\,\ominus\, a)\,\ominus\,(c\,\ominus\, b)\,=\,b\,\ominus\,a)$}
\end{itemize}
$(\underline{S},\leq,\ominus)$ is then a \emph{poset with difference} ($\pi\delta)$ in symbols). If $(\underline{S},\leq)$ is a bounded poset with 0 and 1, then it is a \emph{difference poset}. If $(\underline{S},\leq,\ominus)$ is a $\pi\delta$ satisfying
\[(c\leq a,b \, a\ominus c = b\ominus c \longleftrightarrow a = b)\]
then it is a \emph{poset with cancellative difference}. It is then possible to define a sum operation $\oplus$ via \[(b\ominus a) = c\longleftrightarrow(a\oplus c = b).\]
$(\underline{S},\leq,\oplus)$ satisfies 
\begin{gather*}
(a\oplus b)\stackrel{w^*}{=}(b\oplus a)\\
(a\oplus b)\oplus c\stackrel{w^*}{=}a\oplus(b\oplus c)\\
(a\oplus b = a\oplus c\longleftarrow b = c)\\
\forall{a}\exists{b} a\oplus b = a \\
((a\oplus a)\oplus b = a \longleftarrow a\oplus a = a)
\end{gather*}
where $\stackrel{w^*}{=}$ means if either side is defined then so is the other and the two are equal. $S$ is thus a \emph{cancellative partial abelian semigroup}.

Any difference poset is also a poset with cancellative difference. A poset with cancellative difference is called a \emph{ generalized difference poset}. An \emph{ideal}$K$ in a poset with difference
$(S=(\underline{S},\ominus,\leq)$ is a subset with the induced order satisfying $\forall{x}\in{S}\forall{y}\in{K}(x\leq y\longrightarrow x\in K)$. Any ideal of a difference poset is a
generalized difference poset. A key result is:

\begin{proposition}
Any generalized difference poset is realizable as the ideal of a difference poset.
\end{proposition}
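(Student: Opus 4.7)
The plan is to prove this by an explicit ``doubling'' construction: given a generalized difference poset $S$, I would build a difference poset $D$ consisting of $S$ at the bottom and an order-reversed copy of $S$ on top, then exhibit $S$ as the downward-closed ideal formed by its own elements inside $D$. As a preliminary, I would first note that any generalized difference poset $(S,\leq,\ominus)$ automatically has a least element $0$: the axiom $\forall a\,\exists b\,a\oplus b=a$ combined with cancellativity of $\oplus$ forces a unique such $b$, and commutativity of $\oplus$ makes it independent of $a$; the equivalence $b\ominus a=c\leftrightarrow a\oplus c=b$ then gives $a\ominus 0=a$, so $0\leq a$ for all $a\in S$.

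Next, I would define $D=S^{\flat}\sqcup S^{\sharp}$ as the disjoint union of two formal copies, writing $a^{\flat}$ and $a^{\sharp}$ for the two images of $a\in S$. The intended reading is that $a^{\flat}$ represents $a$ itself, while $a^{\sharp}$ represents ``top minus $a$''. The order on $D$ is specified by $a^{\flat}\leq_D b^{\flat}\Leftrightarrow a\leq b$ in $S$; $a^{\sharp}\leq_D b^{\sharp}\Leftrightarrow b\leq a$ in $S$; $a^{\flat}\leq_D b^{\sharp}\Leftrightarrow a\oplus b$ is defined in $S$; and no $\sharp$-element lies below any $\flat$-element. The bottom of $D$ is $0^{\flat}$ and the top is $0^{\sharp}$. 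The difference of $D$ is defined case by case: $b^{\flat}\ominus_D a^{\flat}:=(b\ominus a)^{\flat}$; $a^{\sharp}\ominus_D b^{\sharp}:=(b\ominus a)^{\flat}$ whenever $a\leq b$; and $b^{\sharp}\ominus_D a^{\flat}:=(a\oplus b)^{\sharp}$ whenever $a\oplus b$ is defined.

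Finally, I would verify that $D$ is a bounded difference poset by checking the three difference-operation axioms recalled in the subsection above. In the all-$\flat$ and all-$\sharp$ cases they reduce to the same axioms in $S$, while the mixed cases are reduced to associativity and cancellativity of the $\oplus$ of $S$ together with the identity $b\ominus(b\ominus a)=a$. The embedding $a\mapsto a^{\flat}$ is then an order-isomorphism of $S$ onto $S^{\flat}$, and $S^{\flat}$ is downward-closed in $D$ by construction, hence an ideal; restriction of $\ominus_D$ to $S^{\flat}$ reproduces $\ominus$ on $S$. The main obstacle is the third axiom $a\leq b\leq c\Leftrightarrow (c\ominus b)\leq (c\ominus a)\wedge (c\ominus a)\ominus(c\ominus b)=b\ominus a$ in the transitional situation where the chain $a\leq_D b\leq_D c$ passes from the $\flat$-layer to the $\sharp$-layer. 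In such cases I would repeatedly rewrite the $\oplus$-based definitions and invoke associativity and cancellativity of $\oplus$ in $S$ to force the required equalities, while chains confined to a single layer are immediate.
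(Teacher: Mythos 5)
The paper itself offers no proof of this proposition: it is quoted as a known result from the cited literature on posets with difference, so there is no in-paper argument to compare against. Your doubling construction $D=S^{\flat}\sqcup S^{\sharp}$, with $a^{\sharp}$ read as ``top minus $a$'', $a^{\flat}\leq_D b^{\sharp}$ iff $a\oplus b$ is defined, and the three case-by-case clauses for $\ominus_D$, is precisely the standard unitization by which this theorem is proved in that literature, and those clauses are the right ones.

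There is, however, a genuine gap at your very first step. The claim that a generalized difference poset has a least element, with independence of $a$ secured ``by commutativity of $\oplus$'', does not hold up: commutativity only rewrites $a\oplus 0_a=a$ as $0_a\oplus a=a$ and gives no link between $0_a$ and $0_b$ for distinct $a,b$. What the axioms actually yield (apply the third axiom to the chains $a\leq b\leq b$ and $a\leq a\leq b$, then the second axiom) is that $a\ominus a$ is constant on each connected component of the comparability relation, not globally; the disjoint union of two one-point difference posets satisfies every listed axiom for a poset with cancellative difference yet has no least element, so neither $0^{\flat}$ nor $0^{\sharp}$ is available to bound $D$. The standard formulation avoids this by building the least element into the definition of a generalized difference poset, and you must either assume it likewise or restrict to connected posets. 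Once a global $0$ is granted, your outline is sound, but two of the verifications you defer carry real content and should be made explicit: transitivity of $\leq_D$ across the two layers requires that $a\leq b$ together with $b\oplus c$ defined imply $a\oplus c$ defined (a consequence of the strong associativity of $\oplus$, since $b\oplus c=a\oplus((b\ominus a)\oplus c)$ and subterms of defined terms are defined), and the third difference axiom is a biconditional, so the converse implication must also be checked in the mixed $\flat$/$\sharp$ cases.
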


In terms of the sum operation a difference poset can also be defined as a cancellative partial abelian semigroup with unity 1, such that $\forall{a}\,(a\oplus{1} = b\longrightarrow a = 0 = 1'=b)$. An
\emph{orthoalgebra} is a difference poset which has a unary complementation operation s.t 
\begin{gather*}
\forall{a}\exists{!a'},a\oplus a'= 1\\ 
(\forall{a}\,(a\oplus a = b\longrightarrow a = 0 = b = 1').
\end{gather*}

The order is recoverable in either case via $a\leq b\,\mathrm{iff}\,a\oplus c = b$ for some $c$. In an orthoalgebra $(a\oplus b)$ is defined if and only if $a\leq b'$. Difference posets are called \emph{effect algebras} when written with the $\oplus$ operation. Posets with difference can be defined without the order relation also and via $\ominus$ alone also.

\subsection{Discussion}

The compositional granular approach is superior because it
\begin{itemize}
\item {makes fewer assumptions about the data,}
\item {avoids contaminating the models in a semantic domain with unjustifiable assumptions imported from another domain,}
\item {helps to keep track of the meaning of expressions at each stage (and is therefore better suited for logic),}
\item {does not introduce unjustifiable probabilistic/subjective probabilistic assumptions and numeric measures,}
\item {the granules themselves are not specified by precision levels, and}
\item {helps to trace/fit etiologies to other methods.}
\end{itemize}

It should be mentioned that most logics/algebraic models for rough sets do not integrate/disregard numeric valuations that are typical of computational approaches.

The following example contexts illustrate aspects of these.

\textbf{Example-1}:

Suppose that we have information sheets about students of a math class across schools that provides information about their performance in different tests, activities, related subjective remarks, age, caste, economic class, gender, and summarized information from earlier information sheets. After cleaning, such data can be transformed into information tables. A socially-sensitive evaluation of their progress relative to their respective privileges, and expected standards can be based on such information.  

The attributes of such a table are mostly not comparable with others. Numeric weighting of performance in activities may be of no use because the learning benefits accumulated by a student are often not quantifiable. Valuations for the attribute would be tuples of subjective parameters. It is easy to see that converting all valuations to numeric/Boolean/integral ones cause information to be lost, and then using norms or metrics or distances to be further irretrievably lost.

Granules are best definable (for a socially sensitive classification) through Boolean case-based criteria such as \emph{If caste is $X$ and class is $Z$ and ... then the student belongs to category $L$}. Approximations constructed from such granules would be superior from the perspective of causality and meaning. Numeric precision based granules, on the other hand, naturally conflate the attributes.

The methods of the paper keep track of attributes and their subjective/non-subjective valuations. In the absence of granular compositionality, steps in the construction of approximations would be lost.

Note that, in practice, multiple pairs of approximations are typically relevant, especially when they are specified with insufficient justification.

\textbf{Example-2}:

Suppose we have a soft or hard clustering $\{(C_i, F_i) ; i=1, 2, ..., n\}$ with the cores $C_i$ being mutually disjoint, and disjoint with all frontiers $F_j$, while the frontiers may have nonempty intersection with other frontiers. The pair $(C_i, F_i)$ is read as a soft cluster for each $i$. Validation indices are known to be unreasonable, and tracing of the meaning of the soft/hard clustering from algorithms works only in a few cases.

Given general rough operators $l$ and $u$, it is always possible to compute a new collection of pairs of the form $(C_i^l, F_i^u)$. The concepts of precision and accuracy considered in the paper are natural for understanding the quality of the clustering relative to the rough perspective offered. If the operators are granular, then explanations would be representable algebraically. In my earlier papers, methods of tracing the meaning are already specified

A key step would about representing the difference between the suggested or discovered general rough clustering and the soft/hard clustering, and the relative quality of the clustering.

The paper contributes to the last part.

\textbf{Example-3}: Most other AI/ML problems (supervised or unsupervised) can be approached from a similar perspective. The paper addresses basic questions of comparison from both granular and non-granular perspectives.

\end{document}